\theoremstyle{plain}
\newtheorem{theorem}{Theorem}[section]
\theoremstyle{definition}
\theoremstyle{remark}
\newcommand{\R}{\mathbb{R}}
\newcommand{\norm}[1]{\lVert #1 \rVert}
\providecommand{\tr}[1]{\mathrm{tr}(#1)}
\newcommand{\Vector}[3]{\prescript{#1}{}{\bm{#2}}_{#3}}
\newcommand{\hatVector}[3]{\prescript{#1}{}{\hat{\bm{#2}}}_{#3}}
\newcommand{\barVector}[3]{\prescript{#1}{}{\bar{\bm{#2}}}_{#3}}
\newcommand{\Rn}[1]{\R^{#1}}
\newcommand{\figref}[1]{Fig.~\ref{fig:#1}}
\newcommand{\secref}[1]{Sec.~\ref{sec:#1}}
\newcommand{\tabref}[1]{Tab.~\ref{tab:#1}}
\title{
Real-Time Initialization of Unknown Anchors for UWB-aided Navigation
}
\author{
Giulio Delama$^{1}$, Igor Borowski$^{1}$, Roland Jung$^{1}$ and Stephan Weiss$^{1}$
\thanks{$^{1}$Giulio Delama, Igor Borowski, Roland Jung, and Stephan Weiss are with the Control of Networked Systems Group, University of Klagenfurt, Austria. {\tt \{giulio.delama,igor.borowski,roland.jung, stephan.weiss\}@ieee.org}}
\thanks{This work was supported by the Federal Ministry for Climate Action,
Environment, Energy, Mobility, Innovation and Technology (BMK) under the grant agreement 894790 (SALTO).}
\thanks{\textbf{{Preprint version~\copyright IEEE, DOI: 10.1109/IROS60139.2025.11247190.}}}
}
\begin{document}
\bstctlcite{BSTcontrol}

\maketitle
\thispagestyle{empty}
\pagestyle{empty}


\begin{abstract}
This paper presents a framework for the real-time initialization of unknown Ultra-Wideband (UWB) anchors in UWB-aided navigation systems.
The method is designed for localization solutions where UWB modules act as supplementary sensors.
Our approach enables the automatic detection and calibration of previously unknown anchors during operation, removing the need for manual setup.
By combining an online Positional Dilution of Precision (PDOP) estimation, a lightweight outlier detection method, and an adaptive robust kernel for non-linear optimization, our approach significantly improves robustness and suitability for real-world applications compared to state-of-the-art. In particular, we show that our metric which triggers an initialization decision is more conservative than current ones commonly based on initial linear or non-linear initialization guesses. This allows for better initialization geometry and subsequently lower initialization errors.
We demonstrate the proposed approach on two different mobile robots: an autonomous forklift and a quadcopter equipped with a UWB-aided Visual-Inertial Odometry (VIO) framework.
The results highlight the effectiveness of the proposed method with robust initialization and low positioning error.
We open-source our code in a C++ library including a ROS wrapper.
\end{abstract}
\section{Introduction}
\label{sec:intro}
Autonomous systems often face challenges in Global Navigation Satellite System (GNSS)-denied environments.
In such scenarios, alternative localization methods become critical.
One promising solution is integrating Ultra-Wideband (UWB) technology, which offers positioning capabilities by measuring distances between the system and fixed UWB anchors.
However, the overall accuracy of UWB-aided navigation systems heavily depends on the precise calibration of the anchors' position.
Typically, the operator manually conducts this calibration process in a separate phase before the actual mission begins.
While effective, this can be time-consuming and impractical in specific scenarios, such as large-scale deployments where manual calibration becomes unfeasible.
To overcome this limitation, various techniques have been studied in literature to calibrate the anchors' position leveraging the mobile robot's onboard sensors and incoming UWB range data.
All such approaches require to decide with which and with how many range measurements the 3D UWB anchor position is initialized. Current metrics for such a decision process are often overconfident leading to poor or even degenerated initial position guesses of the anchor.
\begin{figure}[t]
    \centering
    \includegraphics[width=1.0\linewidth]{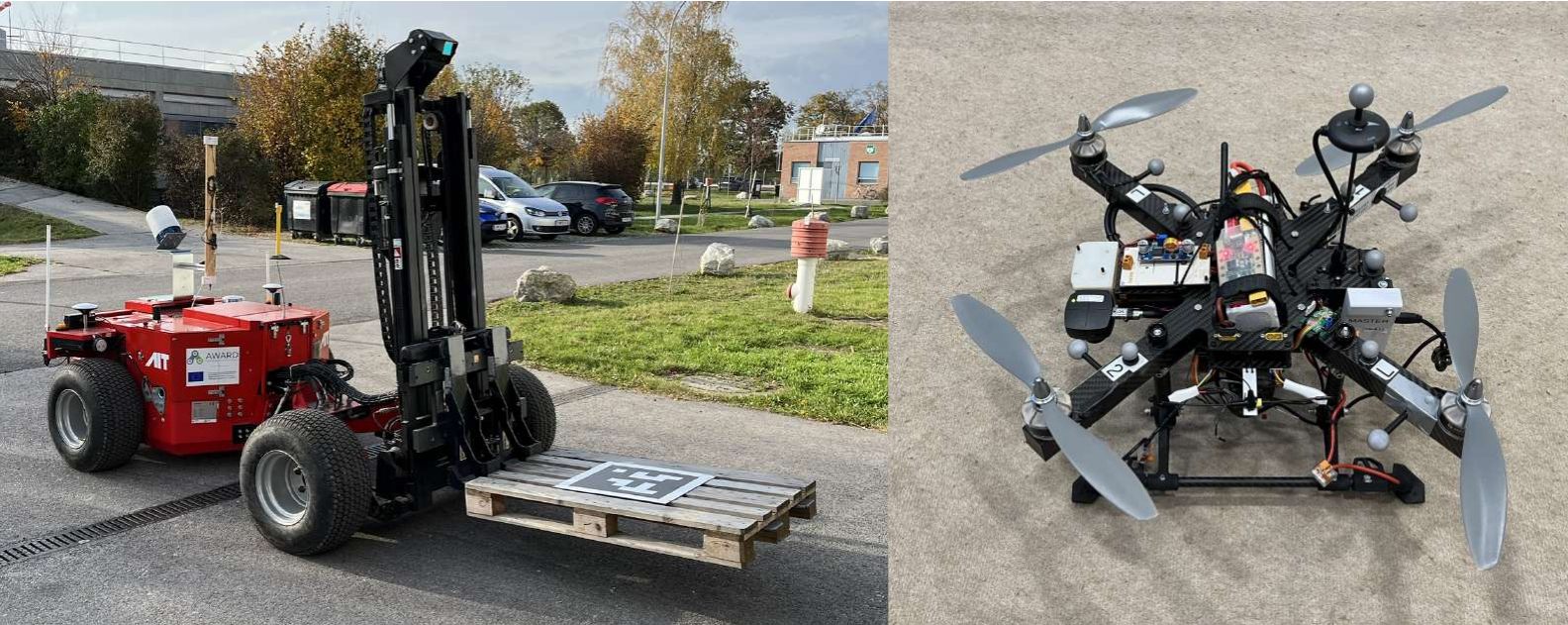}
    \vspace{-6mm}
    \caption{The two different mobile robots used in the real-world demonstration of our anchor initialization method: an automated forklift (left) and a quadcopter (right). The forklift uses two UWB receivers for 3D anchor positioning, as a single tag cannot provide vertical axis information for a ground vehicle. A single UWB receiver is sufficient for the quadcopter, assuming movement along the vertical axis.}
    \label{fig:robots}
    \vspace{-5mm}
\end{figure}

This paper presents a framework with a novel initialization decision process to automatically detect and reliably estimate the position of previously unknown UWB anchors for UWB-aided navigation.
During operation, the system collects measurements from available UWB anchors until a geometric configuration that ensures accurate anchor position estimation is identified.
A simple yet effective online outlier rejection method ensures data integrity, even for compute-constrained platforms. 
We propose a novel approach based on estimating the Positional Dilution of Precision (PDOP) in real-time to determine whether the geometric configuration of the data from a given anchor is sufficient for precise initialization.
Once the criterion is met, the anchor's position is initially estimated using Least Squares (LS) and then refined through an adaptive robust-kernel Nonlinear Least Squares (NLS) method~\cite{Chebrolu2021AdaptiveProblems} to mitigate residual outliers and noise variations.
After initialization, range measurements from the calibrated anchor can be integrated into the navigation framework, enabling early drift compensation and improving accuracy and robustness.

The proposed method is validated through simulations and real-world experiments, proving its robustness to poor initializations and accurate estimation of unknown anchors.
We demonstrate the effectiveness of our solution with two different mobile robots: an Autonomous Mobile Robot (AMR) and an Unmanned Aerial Vehicle (UAV), as shown in~\figref{robots}.
The AMR combines GPS and wheel odometry for localization, while the UAV employs an open-source UWB-aided Visual-Inertial Odometry (VIO) framework UVIO~\cite{Delama2023UVIO:Initialization}.

The main contributions of this work are summarized:
\begin{itemize}
    \item Fully automated and easily integrable framework for the robust online initialization of unknown anchors' position for UWB-aided navigation solutions.
    \item Novel real-time PDOP estimation based on the \emph{closest-point-to-anchor} to trigger the initialization when the geometric configuration ensures accurate anchor positioning.
    \item Extensive validation of our framework through simulations and real-world experiments, including real-time demonstrations with two different robotic platforms: an AMR (automated forklift) and a UAV (quadcopter) utilizing an UWB-aided VIO framework\footnote{https://github.com/aau-cns/uvio.git}.
    \item Implementation and open-sourcing of the code in a publicly available C++ library\footnote{https://github.com/aau-cns/uwb\_init.git} with a ROS wrapper.
\end{itemize}
\section{Related Work}
\label{sec:related}
The automated calibration of UWB anchors' position is a subject of significant research interest, with numerous studies tackling this issue.
Several studies have approached the problem by assuming that anchor-to-anchor measurements are available~\cite{Pelka2016IterativeSystems, Krapez2020AnchorSystems, Nguyen2021VIRALSLAM, Nguyen2022VIRAL-Fusion:Approach, Herbruggen2023MultihopPositioning, Ridolfi2021UWBApproach, Corbalan2023Self-LocalizationPractice, Qi2024CalibrationLocalization, Mahmoud2022Ultra-widebandTracking}, often focusing on scenarios with a dense grid of UWB devices.
In contrast, our approach does not rely on anchor-to-anchor but only on tag-to-anchor measurements, relaxing the assumptions and making it suited for scenarios with fewer anchors that support other localization methods.

Studies that consider only tag-to-anchor measurements are as follows.
Initial research used linear~\cite{Hausman2016Self-calibratingUAV, Batstone2017TowardsAnchors} or iterative~\cite{Shi2019AnchorMeasurements} least squares methods to solve the initialization problem in a single step, neglecting the influence of the problem's geometry on the accuracy of the estimation.
More recent work~\cite{Nguyen2020Tightly-CoupledSystem, Nguyen2021Range-FocusedLocalization, Gao2022LowLocalization} introduces nonlinear optimization and a trajectory-variance-based initialization~\cite{Nguyen2021Range-FocusedLocalization}, but they require an initial anchor estimate.
Trajectory variance is also less reliable than the dilution of precision used in other methodologies.

The authors of~\cite{Jia2022FEJ-VIRO:Odometry} propose a long-short time-window procedure for anchor initialization using nonlinear optimization.
However, this method relies on a fixed duration for initialization data collection, limiting its effectiveness.
Their subsequent work~\cite{Jia2023DistributedNetwork} introduces a distributed method requiring dedicated computational units for the anchors.
In contrast, our approach reduces assumptions, considers the trajectory's geometry, and does not require specialized computational units.

A recent paper~\cite{Li2023UWB-VO:Odometry} employs a factor graph for UWB-aided monocular-vision SLAM with a single anchor, and another study~\cite{Hamesse2024FastSystem} uses a similar approach to integrate LiDAR-inertial SLAM with UWB measurements in an indoor environment.
A different approach~\cite{Hu2023TightlyAnchors} uses a deep neural network to visually detect UWB modules and support initialization.
Apart from relying on specific additional sensors, these approaches can be limited by the high computational cost and reduced generality.

The above-mentioned work does not consider how initialization trajectory geometry affects anchor calibration.
This aspect is examined next.
In a recent study~\cite{Blueml2021BiasPoints}, the authors applied Fisher Information Matrix (FIM) principles to optimize the selection of data collection points for initializing a single UWB anchor with a UAV, following an initial coarse triangulation with random vehicle positions.
\begin{figure}[t]
    \centering
    \includegraphics[width=1.0\linewidth]{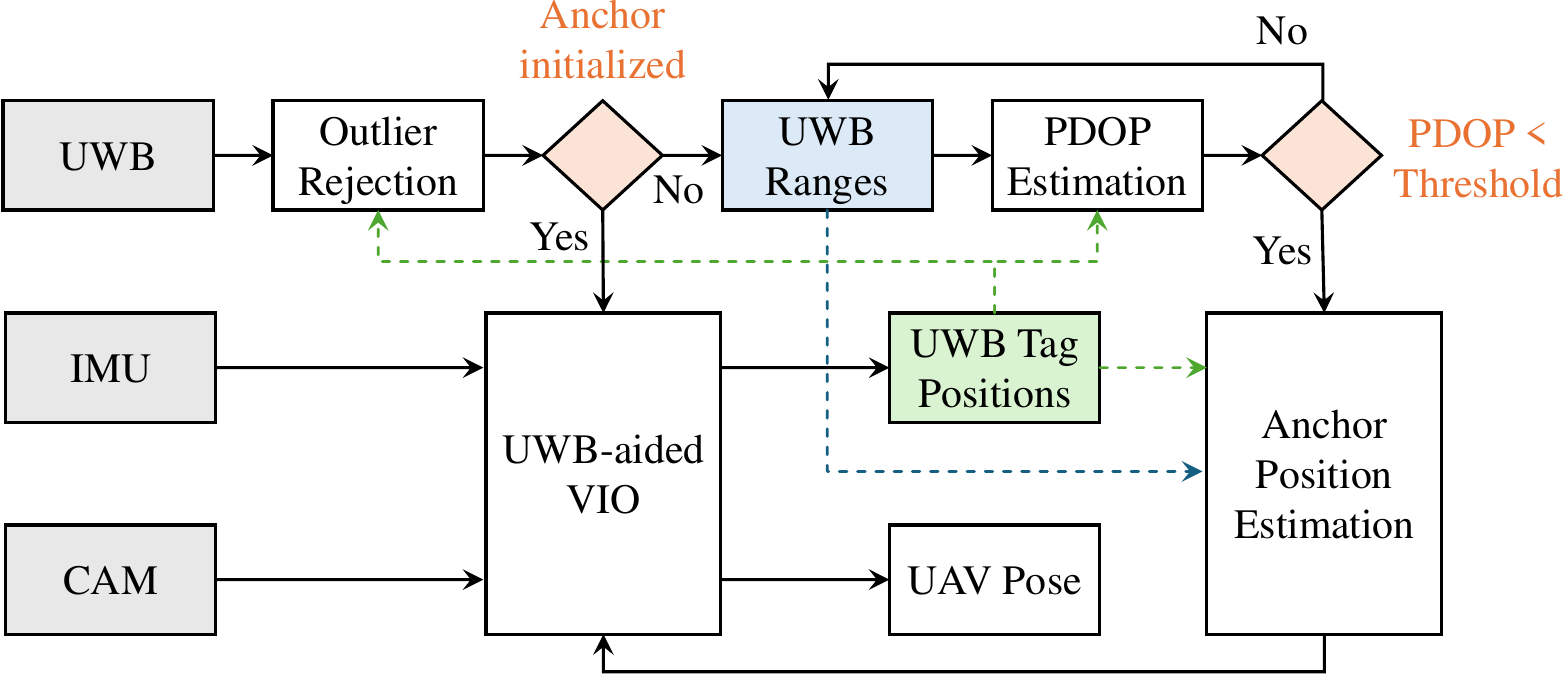}
    \vspace{-5mm}
    \caption{Diagram illustrating the proposed framework for initializing unknown UWB anchors within a UWB-aided VIO system. A key feature is the real-time PDOP estimation, which triggers anchor initialization once it drops below a defined threshold, preventing poor initialization due to unfavorable geometric configurations. This approach allows for the continuous detection and initialization of new anchors becoming available during operation.}
    \label{fig:flowchart}
    \vspace{-5mm}
\end{figure}
Subsequent work~\cite{Delama2023UVIO:Initialization} builds upon this concept and introduces the PDOP as a metric to improve the anchors' initialization.
Specifically, an optimal set of waypoints is computed that minimizes the PDOP for the specific anchors' arrangement, enhancing the geometric configuration between the UAV and multiple anchors simultaneously.
Like~\cite{Blueml2021BiasPoints}, this approach requires initial position estimates for the anchors from a prior random flight.

Similarly,~\cite{Hu2023RobustUAVs} proposes a two-step approach for UWB anchor initialization, incorporating optimal path planning.
Despite the potential of these methods, they have an intrinsic limitation: they require the mobile robot to follow specific waypoints in a dedicated initialization phase, which may not always be feasible in real-world scenarios.
This drastically limits their practical appeal, particularly for ground vehicles.

In a follow-up study~\cite{Jung2024ModularCalibration}, the authors show the effectiveness of the method presented in~\cite{Delama2023UVIO:Initialization} by extending it to multi-tag configurations.
This extension allows for measurements between known and unknown anchors and includes a RANSAC-based outlier rejection technique to enhance its robustness.
Nonetheless, in their assessment of the online anchor initialization approach, the calibration process for a batch of anchors was initiated when a timeout event occurred, which does not guarantee a geometric quality for initialization.

A newly published article~\cite{Luo2025Visual-inertialPosition} employs a robust ridge NLS algorithm to estimate the anchors' position while the PDOP is used to determine if the initialization is sufficiently reliable.
Similarly to~\cite{Luo2025Visual-inertialPosition}, our method overcomes the limit of~\cite{Blueml2021BiasPoints, Delama2023UVIO:Initialization, Hu2023RobustUAVs} and removes the need for a separate initialization phase, enhancing the efficiency of UAV inspection and AMR transport missions.
Moreover, unlike~\cite{Luo2025Visual-inertialPosition} and other geometric methods~\cite{Blueml2021BiasPoints, Delama2023UVIO:Initialization, Hu2023RobustUAVs}, which evaluate the geometry only after initialization using the estimated anchor positions, we propose a \emph{real-time PDOP computation that is provably underconfident}.
This allows us to perform anchor initialization only when the geometric configuration ensures high positioning accuracy.

As a result, our approach is not only simpler to use but also more computationally efficient, avoiding unnecessary anchor estimation steps while ensuring robust initialization performance.
Note that some studies may use the term Geometric Dilution of Precision (GDOP) interchangeably with PDOP.
However, GDOP is specifically related to GNSS navigation and includes a time component. 
In this context, using PDOP as the term would be more accurate and avoid confusion.
\section{Robust Initialization of UWB Anchors}
\label{sec:method}
This section outlines the proposed method for initializing previously unknown anchors in UWB-aided navigation systems during operation.
We focus on a scenario involving passive UWB anchors, where only tag-to-anchor measurements and no anchor-to-anchor data are available.
The full framework is illustrated in the diagram of~\figref{flowchart} serving as a reference throughout this section.

\subsection{Problem Description}
\begin{figure}[b]
    \centering
    \vspace{-5mm}
    \includegraphics[width=1.0\linewidth]{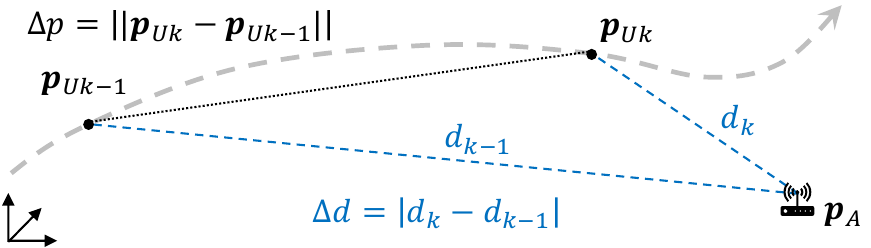}
    \vspace{-5mm}
    \caption{Outlier rejection method using an online consistency check. The absolute differences between consecutive range measurements $\Delta d$ and tag positions $\Delta p$ are compared. Outliers are identified and rejected when the condition ${\Delta d < \Delta p + \tau}$ is not met, with $\tau$ being an adjustable threshold.}
    \label{fig:outlier_rejection}
    \vspace{-5mm}
\end{figure}
Consider a mobile robot equipped with a UWB tag delivering range measurements from one or multiple UWB anchors.
For each anchor $A_i$, the distance measurement is modeled as
\begin{equation}
    d_i(t) = \norm{\Vector{}{p}{U} - \Vector{}{p}{A_i}} + \gamma_i + \eta_i ,
    \label{equ:meas}
\end{equation}
where ${\Vector{}{p}{U} \in \Rn{3}}$ denotes the position of the UWB tag and ${\Vector{}{p}{A_i} \in \Rn{3}}$ the position of the $i$-th UWB anchor in the world frame, while ${\gamma_i \in \R}$ represents a constant bias term and ${\eta_i \sim \mathcal{N}(0,\,\sigma_d^2)}$ denotes the measurement noise.
The UWB tag position $\Vector{}{p}{U}$  can be computed based on the robot’s 6-DoF pose and the rigid transformation defining the tag’s extrinsic calibration.
Assume that the robot operates in an environment where one or multiple UWB anchors have been placed in fixed but unknown positions.
Additional sensors, e.g., GPS, IMU, camera, etc., allow for estimating the robot’s position and orientation at least initially until the first UWB anchors are initialized and can be used for subsequent navigation.
For the remainder of the section, without loss of generality, we focus on the problem of initializing a single UWB anchor, allowing us to simplify the notation by dropping the subscript $i$ from range measurements and anchor position.
Under these assumptions, the data available to the system for initializing each anchor consists of the range measurements $d_{k}$, collected at timestamps $t_k$, and UWB tag positions ${\Vector{}{p}{U_k} = \Vector{}{p}{U}(t_k)}$ interpolated at each UWB timestamp.

\subsection{Outlier Rejection}
\label{sec:outlier}
In UWB-ranging systems, outliers are a common issue, particularly in complex environments with multipath effects and non-line-of-sight (NLOS) conditions frequently encountered in real-world scenarios.
These outliers can significantly degrade the accuracy of the anchors' position estimation.
Therefore, fast and robust outlier rejection is crucial for achieving reliable localization.
A very recent study~\cite{Sun2024AEnvironments} introduces a \emph{triangle-rule consistency check} to identify outliers in their 2D scenario.
Our approach extends this simple but effective idea to 3D applications, and it is based on comparing two consecutive UWB ranges and poses as shown in~\figref{outlier_rejection}.

Let $d_{k-1}$ and $d_k$ be two consecutive distance measurements and $\Vector{}{p}{U_{k-1}}$ and $\Vector{}{p}{U_k}$ the corresponding tag positions.
By defining the quantities ${\Delta p = \norm{\Vector{}{p}{U_k} - \Vector{}{p}{U_{k-1}}}}$ and ${\Delta d = |d_k - d_{k-1}|}$ we can implement the consistency check which should satisfy ${\Delta d \leq \Delta p + \tau}$, with ${\tau \in \R}$ being an adjustable threshold. If, e.g., the measurement noise follows a Gaussian distribution, $\tau$ can be set to a multiple of the standard deviation to act as a hypothesis check. 
Any UWB measurement $d_k$ that fails to meet this condition is classified as an outlier and discarded.
The proposed method reliably detects outliers while remaining computationally efficient, making it well-suited for real-time applications.
\autoref{fig:filtered_data} illustrates the effectiveness of the proposed method applied to real-world UWB range data.
\begin{figure}[t]
    \centering
    \includegraphics[width=1.0\linewidth]{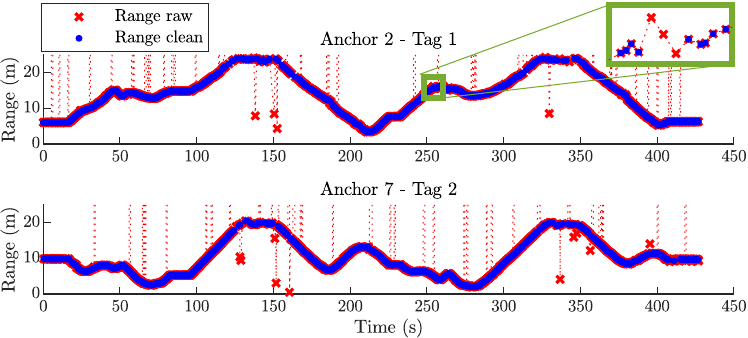}
    \vspace{-7mm}
    \caption{Outlier rejection performance in a real-world experiment with an automated forklift. This figure shows the effectiveness of the proposed online outlier rejection method applied to real-world UWB range data. Red dashed lines indicate outliers that fall outside the plot range. The green box provides a sampled zoomed-in view, highlighting that smaller outliers are also detected.}
    \label{fig:filtered_data}
    \vspace{-5mm}
\end{figure}

\subsection{Real-time PDOP Estimation}
\label{sec:pdop}
Accurate anchor calibration requires evaluating the geometry of the initialization trajectory relative to the actual anchor position.
The PDOP is a key metric for estimating the quality of this geometric configuration~\cite{Fontanelli2021Cramer-RaoG-WLS}.
It is calculated as follows:
\begin{equation}\label{equ:pdop}
    \mathrm{PDOP} = \sqrt{\tr{(\mathbf{G}^\top \mathbf{G})^{-1}}}
\end{equation}
where, for $N$ range measurements $d_{1, 2,\ldots, N}$, ${\mathbf{G} \in \Rn{N \times 3}}$ is
\begin{equation}\label{equ:gmatrix}
    \!\!\!\!\mathbf{G}\!=\!\begin{bmatrix}
    \frac{(\Vector{}{p}{U_1} - \Vector{}{p}{A})^\top}{d_1} \\
    \frac{(\Vector{}{p}{U_2} - \Vector{}{p}{A})^\top}{d_2} \\
    \vdots \\
    \frac{(\Vector{}{p}{U_N} - \Vector{}{p}{A})^\top}{d_N}
    \end{bmatrix}\!=\!\begin{bmatrix}
    \frac{p_{U_{1x}} - p_{A_x}}{d_1} & \frac{p_{U_{1y}} - p_{A_y}}{d_1} & \frac{p_{U_{1z}} - p_{A_z}}{d_1}\\
    \frac{p_{U_{2x}} - p_{A_x}}{d_2} & \frac{p_{U_{2y}} - p_{A_y}}{d_2} & \frac{p_{U_{2z}} - p_{A_z}}{d_2}\\
    \vdots &\vdots &\vdots\\
    \frac{p_{U_{Nx}} - p_{A_x}}{d_N} & \frac{p_{U_{Ny}} - p_{A_y}}{d_N} & \frac{p_{U_{Nz}} - p_{A_z}}{d_N}
    \end{bmatrix} .
\end{equation}
A lower PDOP value implies a better geometric configuration for precise calibration.
Note that the PDOP computation requires not only the measured distances $d_{1, 2,\ldots, N}$ and the known UWB tag positions $\Vector{}{p}{U_{1, 2,\ldots, N}}$, but also the anchor position $\Vector{}{p}{A}$ which is unknown by assumption in our problem formulation.

In~\cite{Luo2025Visual-inertialPosition}, PDOP is used to assess estimation accuracy, and it is computed \emph{after estimating an initial anchor position}.
However, this approach has an inherent limitation: the estimated PDOP could be overconfident due to its dependency on the accuracy of the initially calculated anchor position, which can experience large fluctuations in the beginning or in scenarios with poor geometric configurations.
\begin{figure}[t]
    \centering
    \includegraphics[width=1.0\linewidth]{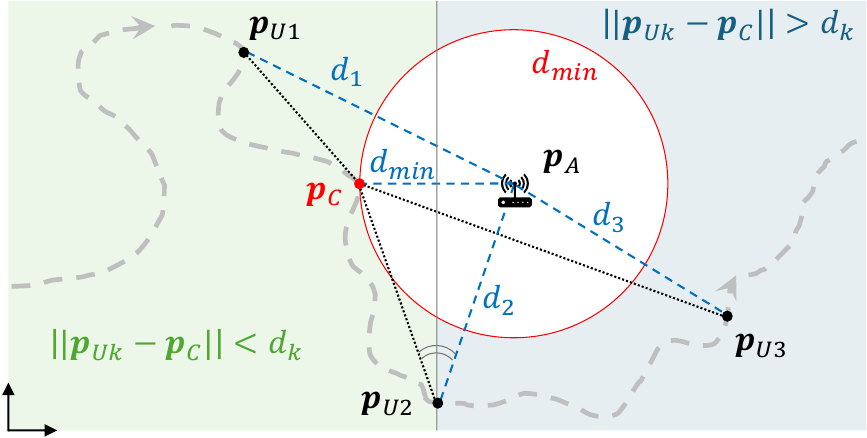}
    \vspace{-7mm}
    \caption{Geometric representation of initialization regions and their influence on our closest-point-to-anchor PDOP estimation from \eqref{equ:gtilde}.
    When initialization occurs in the green zone, which is the typical case, e.g., when anchors are attached to walls, the PDOP estimation remains conservative.
    If initialization takes place in the blue zone, it means that the trajectory "wraps around" the anchor, and thus, the PDOP is inherently low due to good geometric condition, rendering our then slight overconfidence in the metric negligible.}
    \label{fig:pdop}
    \vspace{-4mm}
\end{figure}
To address this problem, we introduce a real-time PDOP estimation method as initialization trigger that is \emph{intrinsically underconfident and does not require prior knowledge of an initial anchor position}.
Our approach leverages the \emph{closest-point-to-anchor} $\Vector{}{p}{C}$, which is defined as the known tag position $\Vector{}{p}{U_j}$ corresponding to the shortest range measurement ${d_j = d_{min} = \min\{d_1, d_2, ..., d_N\}}$.
To do so, we modify the matrix $\mathbf{G}$ in \eqref{equ:gmatrix} and define a new matrix $\tilde{\mathbf{G}}$ as
\vspace{-1mm}
\begin{equation}\label{equ:gtilde}
    \!\!\!\!\tilde{\mathbf{G}}\!=\!\begin{bmatrix}
    \vdots \\
    \frac{(\Vector{}{p}{U_k} - \Vector{}{p}{C})^\top}{d_k} \\
    \vdots
    \end{bmatrix} \in \Rn{(N-1)\times3} ,
\vspace{-1mm}
\end{equation}
where the row corresponding to $\Vector{}{p}{U_k} = \Vector{}{p}{U_j}$ is removed.
The conservative \emph{closest-point-to-anchor} PDOP is then computed as in \eqref{equ:pdop}, replacing $\mathbf{G}$ with the modified geometry matrix $\tilde{\mathbf{G}}$.

\begin{theorem}\label{thm:conservative_pdop}
Let $\mathbf{G} \in \mathbb{R}^{N \times 3}$ be the geometry matrix defined in \eqref{equ:gmatrix}. Let ${\tilde{\mathbf{G}} \in \mathbb{R}^{(N-1) \times 3}}$ be the modified geometry matrix \eqref{equ:gtilde}, obtained by replacing the unknown anchor position $\Vector{}{p}{A}$ with its closest known tag position $\Vector{}{p}{C}$. Then, the closest-point-to-anchor PDOP estimated using $\tilde{\mathbf{G}}$ provides a conservative upper bound on the true PDOP if ${\norm{\Vector{}{p}{U_k} - \Vector{}{p}{C}} \leq \norm{\Vector{}{p}{U_k} - \Vector{}{p}{A}} \forall k}$.
\end{theorem}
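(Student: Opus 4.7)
The goal is to establish the trace-of-inverse inequality $\operatorname{tr}[(\tilde{\mathbf{G}}^\top\tilde{\mathbf{G}})^{-1}] \geq \operatorname{tr}[(\mathbf{G}^\top\mathbf{G})^{-1}]$, which yields the PDOP bound after a square root. My plan is to express $\tilde{\mathbf{G}}$ as an explicit rank-one perturbation of $\mathbf{G}$ and then compare the two inverse Gram matrices via the Sherman--Morrison--Woodbury identity, with the geometric hypothesis entering to control the sign of the resulting scalar expression.

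Let $\hat{\mathbf{c}} := (\Vector{}{p}{C} - \Vector{}{p}{A})/d_{min}$ (a unit vector) and $\hat{\mathbf{u}}_k := (\Vector{}{p}{U_k} - \Vector{}{p}{A})/d_k$, the unit rows of $\mathbf{G}$. Since $\Vector{}{p}{U_j} = \Vector{}{p}{C}$ for the closest-point index $j$, padding $\tilde{\mathbf{G}}$ with a zero row at position $j$ leaves $\tilde{\mathbf{G}}^\top\tilde{\mathbf{G}}$ unchanged and gives the row-wise identity
\[
\tilde{\mathbf{G}} \;=\; \mathbf{G} - \boldsymbol{\gamma}\hat{\mathbf{c}}^\top, \qquad \gamma_k := d_{min}/d_k \in (0,1],
\]
because $(\Vector{}{p}{U_k}-\Vector{}{p}{C})/d_k = \hat{\mathbf{u}}_k - (d_{min}/d_k)\hat{\mathbf{c}}$. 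Expanding yields $\tilde{\mathbf{G}}^\top\tilde{\mathbf{G}} = \mathbf{G}^\top\mathbf{G} - \mathbf{E}$ with $\mathbf{E} := \mathbf{s}\hat{\mathbf{c}}^\top + \hat{\mathbf{c}}\mathbf{s}^\top - \|\boldsymbol{\gamma}\|^2\hat{\mathbf{c}}\hat{\mathbf{c}}^\top$ and $\mathbf{s} := \mathbf{G}^\top\boldsymbol{\gamma}$, a symmetric matrix of rank at most two. The theorem's hypothesis $\|\Vector{}{p}{U_k} - \Vector{}{p}{C}\|\le\|\Vector{}{p}{U_k} - \Vector{}{p}{A}\|$ rewrites (by squaring and expanding) as the scalar bound $\hat{\mathbf{u}}_k^\top\hat{\mathbf{c}} \geq d_{min}/(2 d_k)$; summed against $\gamma_k$ and separating out the $k=j$ term this delivers the alignment estimate $\hat{\mathbf{c}}^\top\mathbf{s} \ge (1 + \|\boldsymbol{\gamma}\|^2)/2$, saying that $\mathbf{s}$ points substantially along $\hat{\mathbf{c}}$. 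Factoring $\mathbf{E} = \mathbf{P}\mathbf{M}\mathbf{P}^\top$ with $\mathbf{P} = [\hat{\mathbf{c}}, \mathbf{s}]$ and a $2\times 2$ indefinite mixing matrix $\mathbf{M}$, and applying Woodbury, reduces the trace difference to a closed-form scalar rational expression in the Gramian-weighted products $\hat{\mathbf{c}}^\top(\mathbf{G}^\top\mathbf{G})^{-k}\hat{\mathbf{c}}$, $\hat{\mathbf{c}}^\top(\mathbf{G}^\top\mathbf{G})^{-k}\mathbf{s}$, $\mathbf{s}^\top(\mathbf{G}^\top\mathbf{G})^{-k}\mathbf{s}$ for $k\in\{1,2\}$, whose sign is forced to be nonnegative by the alignment estimate.

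The main obstacle is that $\mathbf{E}$ is, in general, \emph{indefinite}: it has one positive and one negative eigenvalue, so the attractive Loewner route $\tilde{\mathbf{G}}^\top\tilde{\mathbf{G}} \preceq \mathbf{G}^\top\mathbf{G}$, which would immediately yield the trace-of-inverse bound, is \emph{not} available even when the hypothesis holds (a small $N=4$ example confirms this). The half-space hypothesis does not restore positive semi-definiteness of $\mathbf{E}$ but rather pins down the direction of its positive eigenvector to be close to $\hat{\mathbf{c}}$, and it is this directional control that keeps the Woodbury scalar nonnegative. A cleaner fallback, should the $2\times 2$ Schur-complement algebra become unwieldy, is to split $\mathbf{E}$ into the two successive rank-one updates $-\|\boldsymbol{\gamma}\|^2\hat{\mathbf{c}}\hat{\mathbf{c}}^\top$ and $\mathbf{s}\hat{\mathbf{c}}^\top + \hat{\mathbf{c}}\mathbf{s}^\top$: the first is absorbed by the elementary fact that removing the unit row $\hat{\mathbf{c}}^\top$ from $\mathbf{G}$ can only increase the PDOP, while the second is handled by one application of the Sherman--Morrison formula together with Cauchy--Schwarz and the alignment bound on $\hat{\mathbf{c}}^\top\mathbf{s}$.
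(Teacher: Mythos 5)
Your route is genuinely different from the paper's, and your central observation --- that row-wise norm shrinkage does \emph{not} yield the Loewner ordering $\tilde{\mathbf{G}}^\top\tilde{\mathbf{G}} \preceq \mathbf{G}^\top\mathbf{G}$ --- targets exactly the step the paper's own proof relies on: the paper sums the outer products, notes that each row of $\tilde{\mathbf{G}}$ has norm at most that of the corresponding row of $\mathbf{G}$, and concludes the semidefinite ordering term by term. For two non-parallel vectors $u,v$ with $\norm{v}\le\norm{u}$, the difference $uu^\top - vv^\top$ has nonnegative trace but negative determinant on $\mathrm{span}\{u,v\}$, hence is indefinite, so the term-by-term claim is false and only the summed ordering could possibly survive (you assert that it, too, can fail under the hypothesis). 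Your setup --- the padded rank-one row perturbation $\tilde{\mathbf{G}} = \mathbf{G}-\boldsymbol{\gamma}\hat{\mathbf{c}}^\top$, the rank-two symmetric correction $\mathbf{E}$, and the alignment estimate $\hat{\mathbf{c}}^\top\mathbf{s}\ge(1+\norm{\boldsymbol{\gamma}}^2)/2$ obtained by squaring the half-space hypothesis --- is correct as far as it goes and is more honest about where the difficulty actually lies.

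The gap is that the decisive inequality is announced rather than proved. After the Woodbury reduction you are left with a rational expression in the quantities $\hat{\mathbf{c}}^\top(\mathbf{G}^\top\mathbf{G})^{-k}\hat{\mathbf{c}}$, $\hat{\mathbf{c}}^\top(\mathbf{G}^\top\mathbf{G})^{-k}\mathbf{s}$, $\mathbf{s}^\top(\mathbf{G}^\top\mathbf{G})^{-k}\mathbf{s}$ for $k\in\{1,2\}$, and the claim that the single scalar bound on $\hat{\mathbf{c}}^\top\mathbf{s}$ forces its sign \emph{is} the content of the theorem; nothing in the proposal establishes it, and since that bound says nothing about how $\hat{\mathbf{c}}$ and $\mathbf{s}$ sit relative to the eigenvectors of $\mathbf{G}^\top\mathbf{G}$, it is not evident that it can. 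The fallback is not a safe retreat either: writing $\tilde{\mathbf{G}}^\top\tilde{\mathbf{G}} = \mathbf{G}^\top\mathbf{G} - (\mathbf{s}\hat{\mathbf{c}}^\top+\hat{\mathbf{c}}\mathbf{s}^\top) + \norm{\boldsymbol{\gamma}}^2\hat{\mathbf{c}}\hat{\mathbf{c}}^\top$, the term $+\norm{\boldsymbol{\gamma}}^2\hat{\mathbf{c}}\hat{\mathbf{c}}^\top$ \emph{adds} information along $\hat{\mathbf{c}}$ and therefore lowers the trace of the inverse --- it has the wrong sign to be ``absorbed by removing a row'' --- so the two partial updates push the trace in opposite directions and cannot be handled by independent one-sided bounds. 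As it stands, the proposal is a credible programme that correctly diagnoses the weakness of the published argument, but it does not yet close the theorem.
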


\begin{proof}
By expanding the information matrix as a sum of outer products, one obtains
\vspace{-1mm}
\begin{equation}
    \mathbf{G}^\top \mathbf{G} = \sum_{k=1}^{N} \frac{(\Vector{}{p}{U_k} - \Vector{}{p}{A})(\Vector{}{p}{U_k} - \Vector{}{p}{A})^\top}{d_k^2}.
\end{equation}
For the modified geometry matrix $\tilde{\mathbf{G}}$ and ${\Vector{}{p}{C} = \Vector{}{p}{j}}$, we have
\begin{equation}
    \tilde{\mathbf{G}}^\top \tilde{\mathbf{G}} = \sum_{\substack{k=1 \\ k \neq j}}^{N} \frac{(\Vector{}{p}{U_k} - \Vector{}{p}{C})(\Vector{}{p}{U_k} - \Vector{}{p}{C})^\top}{d_k^2}.
\end{equation}
By assumption we have ${\|\Vector{}{p}{U_k} - \Vector{}{p}{C}\| \leq \|\Vector{}{p}{U_k} - \Vector{}{p}{A}\| = d_k \forall k}$.
Dividing by $d_k$ implies that the normalized vectors in $\tilde{\mathbf{G}}$ satisfy
\begin{equation}
    \frac{\|\Vector{}{p}{U_k} - \Vector{}{p}{C}\|}{d_k} \leq \frac{\|\Vector{}{p}{U_k} - \Vector{}{p}{A}\|}{d_k} = 1.
\end{equation}
Since the terms in $\tilde{\mathbf{G}}^\top \tilde{\mathbf{G}}$ contain the outer product of these scaled vectors, each term in the sum contributes a smaller or equal magnitude compared to the corresponding term in $\mathbf{G}^\top \mathbf{G}$. Thus, we have the inequality ${\tilde{\mathbf{G}}^\top \tilde{\mathbf{G}} \preceq \mathbf{G}^\top \mathbf{G}}$ and, by the monotonicity of the matrix inverse, we get ${(\tilde{\mathbf{G}}^\top \tilde{\mathbf{G}})^{-1} \succeq (\mathbf{G}^\top \mathbf{G})^{-1}}$.
Taking the square root of the trace and applying its monotonicity over positive semidefinite matrices, we obtain
\begin{equation}
    \sqrt{\tr{(\tilde{\mathbf{G}}^\top \tilde{\mathbf{G}})^{-1}}} \geq \sqrt{\tr{(\mathbf{G}^\top \mathbf{G})^{-1}}}.
\end{equation}
This proves that if ${\norm{\Vector{}{p}{U_k} - \Vector{}{p}{C}} \leq \norm{\Vector{}{p}{U_k} - \Vector{}{p}{A}} \forall k}$ then the estimated PDOP using the closest known tag position is always greater or equal to the true PDOP.
\end{proof}

\autoref{fig:pdop} illustrates a simplified 2D representation of the geometry constraints for the PDOP estimation problem and the Theorem \ref{thm:conservative_pdop}.
The distance inequality states that as long as the distance from any point on the trajectory where a measurement was taken for the PDOP computation to the point on the trajectory closest to the anchor (tag measurement with minimal distance measurement in the set), i.e., $\norm{\Vector{}{p}{U_k} - \Vector{}{p}{C}}$, is smaller than the distance from any of those points to the anchor, i.e., $\norm{\Vector{}{p}{U_k} - \Vector{}{p}{A}}$, our approach is inherently conservative.
Additionally, our method demonstrates robustness during the critical phase when initial measurements from the anchor are first received, even when the anchor is far away.
As shown in~\figref{pdop_sim}, simulation results confirm that the PDOP is always greater than or equal to the true PDOP and exhibits a strong correlation with the average initialization error.
In practical implementation, initialization is triggered when the online estimated PDOP falls below a predefined threshold that depends on the specific use case (see \secref{results}).
\begin{figure}[t]
    \centering
    \includegraphics[width=1.0\linewidth]{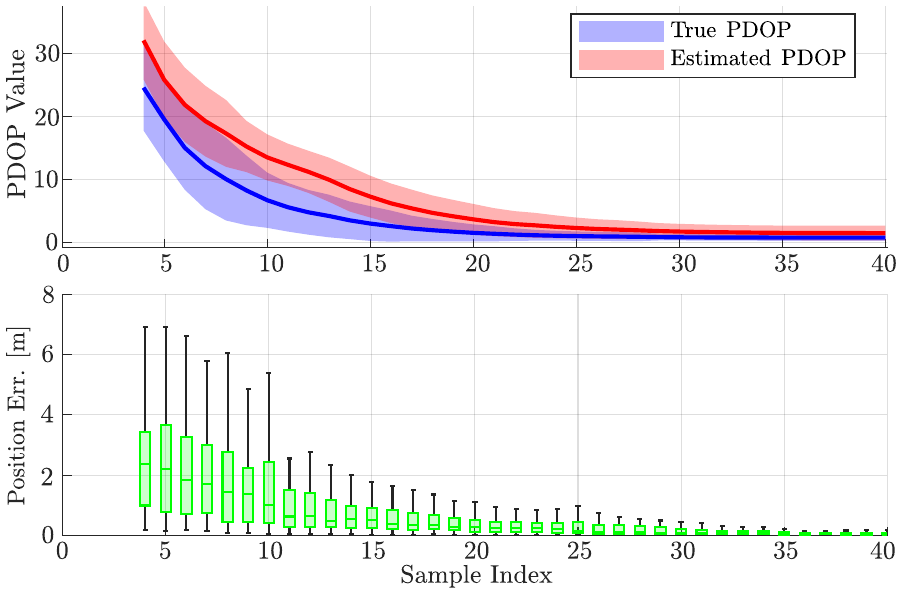}
    \vspace{-7mm}
    \caption{Comparison between true and our estimated PDOP (top) and corresponding initialization error (bottom). The figure shows the results of a Monte Carlo (MC) simulation with $M = 10^2$ different random trajectories and measurement noise realizations (${\sigma_d = 0.15~m}$). The top plot reveals that the estimated PDOP is consistently higher than the true one, while the bottom plot shows a correlation between the PDOP and the initialization error. Sample index refers to the increasing number of data samples used for initialization.}
    \label{fig:pdop_sim}
    \vspace{-4mm}
\end{figure}

\subsection{Anchor Position Estimation}
When the PDOP criterion is satisfied, anchor initialization begins with a \emph{coarse solution} obtained by solving a total LS problem using the \emph{Optimal Double Method} detailed in~\cite{Delama2023UVIO:Initialization}.
Specifically we get ${\mathbf{A} \Vector{}{x}{} = \Vector{}{b}{}}$ with ${\Vector{}{x}{} = \begin{bmatrix} \Vector{}{p}{A} & \gamma \end{bmatrix}^T \in \Rn{4 \times 1}}$ including the anchor position $\Vector{}{p}{A}$ and constant bias $\gamma$ from \eqref{equ:meas}.
Each $k$-th row of ${\mathbf{A} \in \Rn{(N-1) \times 4}}$ is computed as
\begin{equation}
    \mathbf{A}_k = \begin{bmatrix}
        -\left({\Vector{}{p}{U_k}} - {\Vector{}{p}{U_j}}\right)^T & \left(d_k - d_j\right)
    \end{bmatrix},
\end{equation}
and each $k$-th row of ${\Vector{}{b}{} \in \Rn{(N-1) \times 1}}$ is computed as
\begin{equation}
    b_k = 
        \frac{1}{2}\left(\left(d_k^2 - d_j^2\right) - \left(\norm{\Vector{}{p}{U_k}}^2 - \norm{\Vector{}{p}{U_j}}^2\right)\right) .
\end{equation}
Note that the index $j$ can be selected arbitrarily and does not necessarily correspond to the closest point $\Vector{}{p}{C}$.
In~\cite{Delama2023UVIO:Initialization}, a method is proposed for selecting the index that minimizes the uncertainty in the LS solution.
However, the choice of this \emph{pivot} index has minimal impact on the final solution, as a non-linear optimization algorithm is then applied to the initialization data with the LS solution as the initial guess.
The standard NLS problem aims at finding the parameter vector ${\Vector{}{\theta}{} = (\Vector{}{p}{A}, \gamma) \in \Rn{4}}$ that minimizes a sum of squared residuals:
\begin{equation}\label{equ:NLS}
    \Vector{}{\barVector{}{\theta}{}}{} = \min_{\Vector{}{\theta}{}} \sum_{k=1}^{N} \norm{r_k(\Vector{}{\theta}{})}^2 = \min_{\Vector{}{\theta}{}} \sum_{k=1}^{N} \norm{\hat{d}_k(\Vector{}{\theta}{}) - d_k}^2 ,
\end{equation}
where ${r_k(\Vector{}{\theta}{}) = \hat{d}_k(\Vector{}{\theta}{}) - d_k}$ represents the $k$-th residual between a prediction ${\hat{d}_k(\Vector{}{\theta}{}) = d(\Vector{}{\theta}{},t_k)}$ from \eqref{equ:meas} and the actual UWB range measurement $d_k$. 
In~\cite{Delama2023UVIO:Initialization}, the Levenberg-Marquardt (LM) algorithm is used for optimization, but the presence of noise, remaining outliers, and unmodeled effects in the UWB-ranging data can compromise the accuracy of the final result.
Studies like~\cite{Batstone2017TowardsAnchors, Hamesse2024FastSystem} demonstrated the effectiveness of \emph{robust kernels} in down-weighting large UWB residuals during non-linear optimization. 
However, they require appropriate kernel selection and parameter tuning.
A similar limitation is found in~\cite{Jung2024ModularCalibration}, which includes a RANSAC-based outlier rejection.

To overcome this problem and increase our system's robustness, in addition to the outlier rejection method introduced in \secref{outlier}, we incorporate the \emph{Adaptive Robust Kernel}~\cite{Chebrolu2021AdaptiveProblems} that automatically adapts to the distribution of the residuals.
The new minimization problem is formulated as follows:
\begin{equation}
    (\barVector{}{\theta}{},\bar{\alpha}) = \min_{\Vector{}{\theta}{},\alpha} \sum_{k=1}^{N}\rho(r_k(\Vector{}{\theta}{}),\alpha) ,
    \label{equ:robust}
\end{equation}
where $\rho(r_k(\Vector{}{\theta}{}),\alpha)$ is the \emph{generalized robust loss function}
\begin{equation}
    \rho(r,\alpha,c) = \frac{|\alpha - 2|}{\alpha}\left(\left(\frac{(r/c)^2}{|\alpha - 2|}+1\right)^{\alpha/2}-1\right) ,
\end{equation}
${\alpha \in \R}$ is an additional optimization parameter that controls the shape of the kernel, and $c \in \R$ is a fixed scale parameter.
The full description of the algorithm and how to solve the minimization problem \eqref{equ:robust} is omitted due to space limitations but is comprehensively explained in~\cite{Chebrolu2021AdaptiveProblems}.
This approach enables the joint estimation of both kernel shape $\alpha$ and parameter vector $\Vector{}{\theta}{}$, accounting for varying noise levels and improving robustness against outliers and unmodeled environmental effects.

\section{Experiments and Results}
\label{sec:results}
The method outlined in the previous section is validated with simulated and real-world experiments.

\subsection{Simulation}
\begin{table}[t]
    \centering
    \caption{Comparison of initialization performance in simulation: PDOP-based (our) vs. fixed-window strategy}
    \begin{tabular}{|>{\centering\arraybackslash}p{0.2cm}|>{\centering\arraybackslash}p{0.8cm}|>{\centering\arraybackslash}p{1.2cm}|>{\centering\arraybackslash}p{1.2cm}|>{\centering\arraybackslash}p{0.6cm}|>{\centering\arraybackslash}p{0.6cm}|>{\centering\arraybackslash}p{1.2cm}|}
    \hline
          \# & Method & Avg. ($m$) & Med. ($m$) & Init. & >1$m$ & Ratio ($\%$) \\ \hline\hline
          \multirow {2}{*}{1}
          &Fixed & 0.233 & \textbf{0.039} & 3000 & 283 & 9.43 \\
          &Our & \textbf{0.131} & 0.054 & 1362 & 37 & \textbf{2.71} \\ \hline
          \multirow {2}{*}{2}
          &Fixed & 0.350 & \textbf{0.043} & 3000 & 505 & 16.8\\ 
          &Our & \textbf{0.100} & 0.044 & 576 & 14 & \textbf{2.43} \\ \hline
          \multirow {2}{*}{3}
          &Fixed & 0.516 & 0.055 & 3000 & 615 & 20.5 \\
          &Our & \textbf{0.148} & \textbf{0.048} & 513 & 21 & \textbf{4.09} \\ \hline
          \multirow {2}{*}{4}
          &Fixed & 0.747 & 0.229 & 3000 & 621 & 20.7 \\
          &Our & \textbf{0.343} & \textbf{0.173} & 623 & 38 & \textbf{6.09}\\ \hline
    \end{tabular}
    \label{tab:simerrors}
    \vspace{-3mm}
\end{table}
We simulate UWB tag trajectories in a tunnel environment with multiple anchors distributed along its entire length, and we generate synthetic UWB range data.
We perform multiple Monte Carlo (MC) simulations of the initialization procedure, utilizing random trajectories with varying noise levels and outliers.
Each simulation consists of $M = 10^2$ different noise realizations.
\autoref{fig:pdop_sim} shows the results of such simulations for a single anchor, evaluating the relation between the true and the estimated PDOP and its association with the initialization error.
The data shows that the error and the PDOP are correlated, and our estimated PDOP is consistently higher than the true one.
This allows for formulating a criterion where a PDOP threshold is selected to trigger initialization ensuring high accuracy.
In our evaluation, this threshold was set to 1.

We validate the robustness of our PDOP-based approach against poor initializations by comparing it to a simple fixed-window initialization strategy that uses measurements collected throughout the whole trajectory.
Suboptimal trajectories are simulated to test our method's effectiveness in detecting unfavorable geometrical configurations and avoiding erroneous anchor initialization.
Four MC simulations are performed with increasing noise ($\sigma_d = 0.1-0.5~m$) and percentage of outliers, assessing the initialization errors for both methods, as presented in \tabref{simerrors}.
The table reports key metrics: average and median initialization errors, total initialized anchors ("Init." column), anchors with errors over $1~m$, and the ratio of these bad initializations to the total. The table allows the following important conclusions: while the fixed window approach is more often more accurate (lower median error), our approach significantly outperforms it on average, particularly under higher noise conditions. This is a direct result of our conservative initialization decision and the higher quality fluctuations for the fixed window approach (also reflected in the last three columns of the table showing number of initializations, large error counts, and ratio of bad initializations, respectively).

\subsection{Real-world AMR Experiments}
\begin{figure*}[t]
    \centering
    \includegraphics[width=1.0\linewidth]{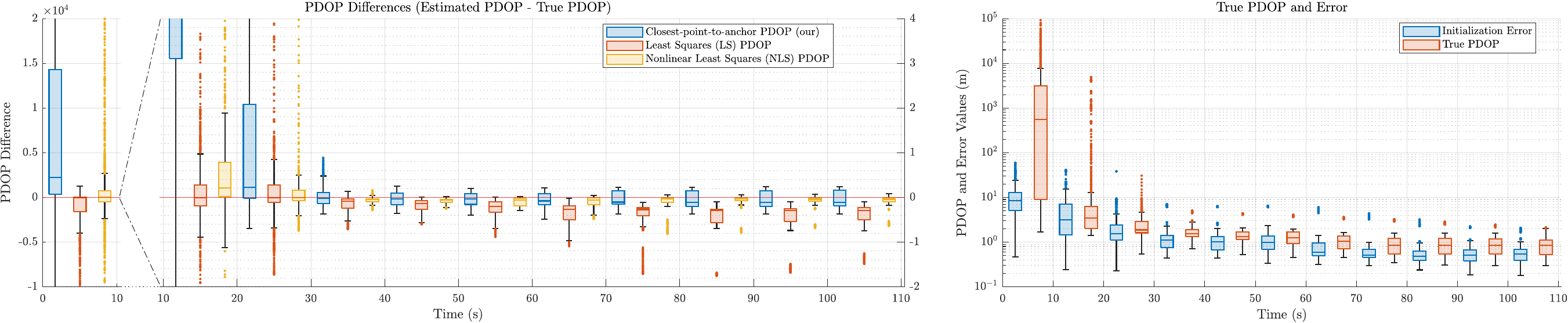}
    \vspace{-8mm}
    \caption{Boxchart comparison of PDOP estimation methods and initialization error across the first $110~s$ of the five real-world AMR trajectories in~\figref{amr_trajectories}, segmented into $10~s$ intervals to enhance statistical sampling. The left plot shows the PDOP difference (estimated - true) across various estimation methods, with a different y-axis scale for better comparison. Our closest-point-to-anchor PDOP (blue) is computationally efficient with conservative estimates (positive values here). In contrast, the LS PDOP (red) is more computationally demanding while still being real-time capable but is more unpredictable (see several overconfident, i.e., negative, dots), as its accuracy depends on the estimated anchor position, which can exhibit large fluctuations. Nonlinear LS PDOP (yellow) is more accurate but not suitable in real-time, and remains susceptible to divergence, leading to occasional outliers. In comparison, our method consistently demonstrates robust and stable performance. The right plot highlights the relationship between PDOP and initialization error on a logarithmic scale.}
    \label{fig:boxchart}
    \vspace{-5mm}
\end{figure*}
\begin{table}[t]
\centering
\caption{AMR real-world experiment: comparison of the anchors' position RMSE with different outlier rejection methods. }
\begin{tabular}{|c|c|c|c|c|c|}
\hline
\#                 & Method & x ($m$)    & y ($m$)     & z ($m$)      & Avg. ($m$)  \\ \hline\hline
\multirow{2}{*}{1} & Our    & \textbf{0.17} & \textbf{0.12} & \textbf{0.35} & \textbf{0.44} \\
                   & RANSAC & \textbf{0.17} & \textbf{0.12} & 0.37          & 0.47          \\ \hline
\multirow{2}{*}{2} & Our    & \textbf{0.18} & \textbf{0.11} & 0.34          & 0.43          \\
                   & RANSAC & 0.2           & 0.13          & \textbf{0.31} & \textbf{0.42} \\ \hline
\end{tabular}
\label{tab:amr-calib}
\vspace{-3mm}
\end{table}

To evaluate our framework outdoors with real-world data collected by an AMR, we deployed in total 11 stationary UWB anchors in front of a machine hall, see left image of~\figref{robots}, with slight variation in their height ($0.9-2.9~m$).
Two UWB tags were rigidly attached to the AMR above the center of the rear axle with a horizontal displacement of $0.85~m$.
This vertical setup should facilitate the calibration of anchor heights, as the ARM operates on a nearly flat horizontal plane, typically leading to high vertical Dilution of Precision (DOP).
The UWB anchors provided measurements to each of the two tags at roughly 10Hz.
The AMR was moving in an area of roughly $40 \times 20~m^2$ as shown in~\figref{amr_trajectories}.
The true pose of the vehicle was captured by a commercial RTK GPS-INS system with absolute orientation.
The true position of the anchors was obtained by theodolite measurements, that were aligned to the RTK reference frame of the robot.

To demonstrate the effectiveness of the proposed outlier rejection method presented in \secref{outlier}, we employed the NLS slover proposed in~\cite{Delama2023UVIO:Initialization} on our prefiltered data against the RANSAC method proposed in~\cite{Jung2024ModularCalibration}.
The results listed in \tabref{amr-calib} show that the RANSAC approach generally performs well, but it is computationally expensive, and its performance greatly depends on the selected parameters.
In this evaluation, we wanted to obtain an inlier subset with a probability of ${p=95\%}$, choose ${s=60}$ measurements for the solver, and assume ${e=10\%}$ of outliers in the samples, which lead to ${n = \frac{log(1-p)}{1-(1-e)^s} = 1666}$ iterations of solving the NLS on a subset of $s$ samples.
It took, on average, 463 times longer than the proposed outlier rejection method, which removed on average $6.5\%$ of the samples, with a threshold of ${\tau=0.1 \approx 2\sigma_d}$. 

\autoref{fig:boxchart} shows an evaluation of the proposed closest-point-to-anchor PDOP introduced in \secref{pdop} over the five different trajectories of~\figref{amr_trajectories}.
Our method, i.e., the computation of the geometry matrix \eqref{equ:gtilde}, is compared against two different PDOP estimation methods commonly utilized in the literature: the LS PDOP, i.e., the computation of the geometry matrix \eqref{equ:gmatrix} is based on the LS estimate $\hatVector{}{p}{A}$, and the NLS PDOP, i.e., the computation of \eqref{equ:gmatrix} is based on the respective NLS estimate.
\begin{figure}[t]
    \centering
    \includegraphics[width=1.0\linewidth]{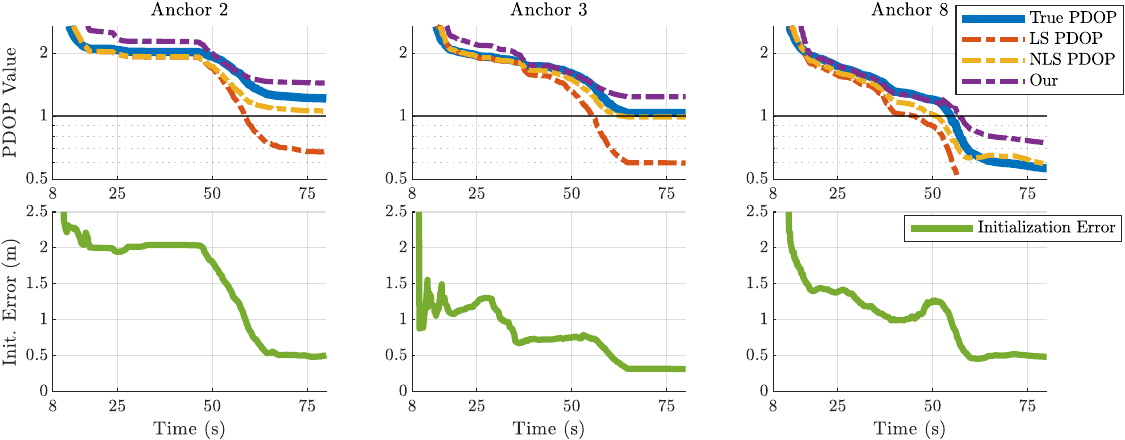}
    \vspace{-7mm}
    \caption{PDOP estimation and initialization error for three UWB anchors in real-world forklift experiments. The top plots compare PDOP values estimated by different methods against the true PDOP, with our method being the most conservative. The bottom plots show the corresponding initialization error.}
    \label{fig:amr_plot}
    \vspace{-5mm}
\end{figure}
\begin{figure}[t]
    \centering
    \includegraphics[width=1.0\linewidth]{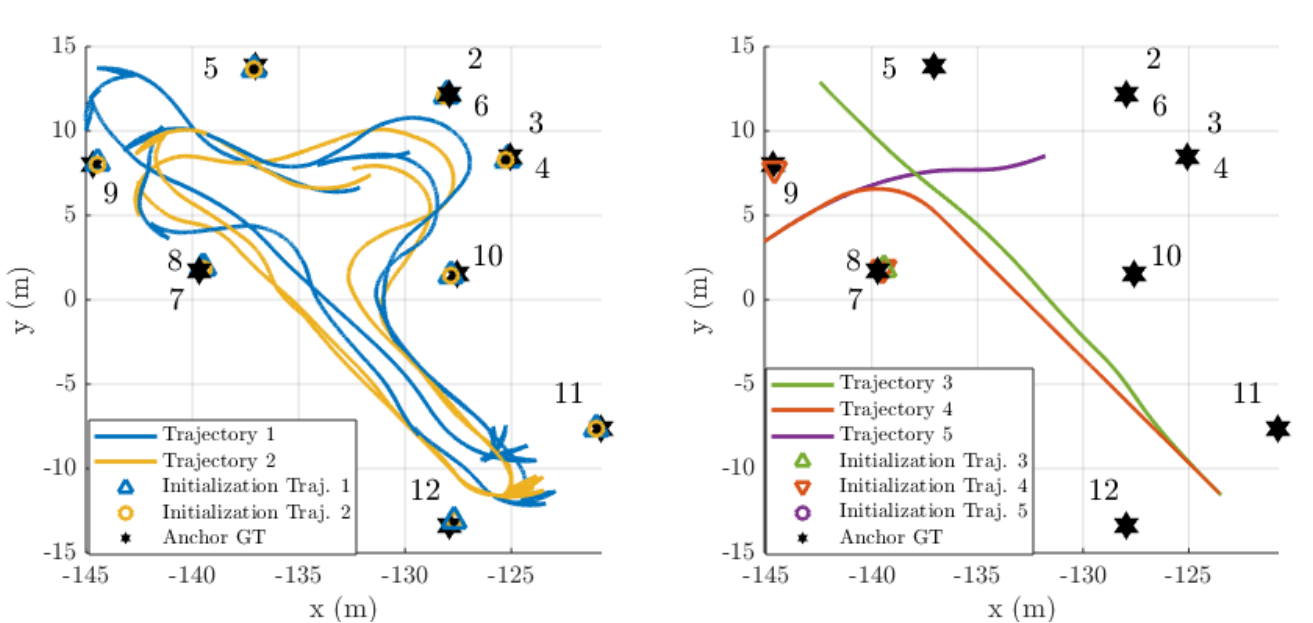}
    \vspace{-7mm}
    \caption{AMR trajectories from outdoor experiments. The left plot illustrates two trajectories (blue and yellow lines) where all 11 anchors were initialized with a PDOP threshold of 1. The average error for these initializations is less than $0.5~m$. The right plot shows three suboptimal trajectories (green, red, and purple lines). Only one anchor (7) for the green trajectory and three anchors (7, 8, 9) for the red trajectory met the PDOP threshold and were initialized, avoiding large positioning errors.}
    \label{fig:amr_trajectories}
    \vspace{-5mm}
\end{figure}

To empirically validate \cref{thm:conservative_pdop}, we computed the PDOP using solutions from different methods as the number of observations grew. \autoref{fig:amr_plot} illustrates that our proposed approach results in conservative PDOP estimates. Furthermore, the plot highlights a correlation between initialization error and PDOP, advocating for a PDOP threshold to trigger the initialization process. In scenarios with suboptimal trajectories, premature initialization can lead to significant position errors or, in the worst case, to incorrect solutions due to ambiguities. Therefore, it is crucial to avoid early initialization. \autoref{fig:amr_trajectories} displays the successful anchor initialization of our proposed approach on well-conditioned trajectories and partial initialization on ill-conditioned ones.

\subsection{Real-world UAV Experiments}
\begin{figure}[t]
    \centering
    \includegraphics[width=1.0\linewidth]{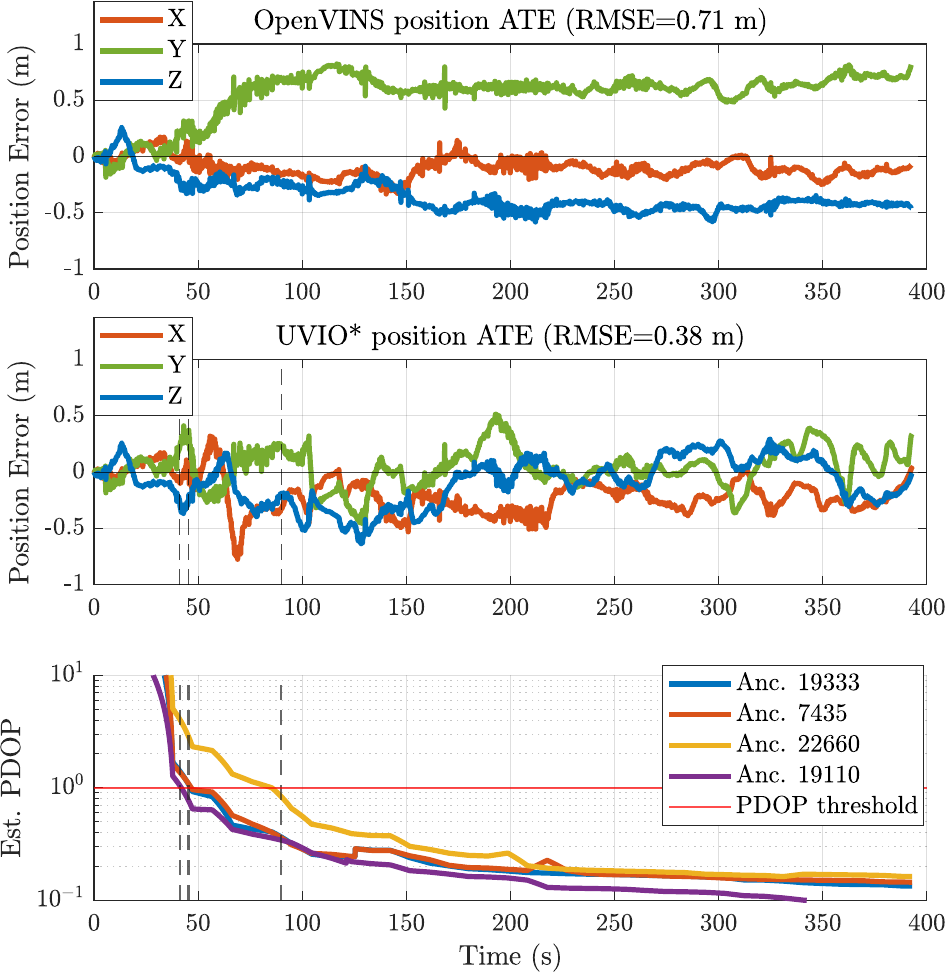}
    \vspace{-7mm}
    \caption{Comparison of OpenVINS and UVIO* performance in a real-world experiment. The top plot displays the position ATE for VIO-only. The middle plot shows the position ATE for UWB-aided VIO using the same experimental data. The bottom plot depicts the evolution of the estimated PDOP for each anchor, with initialization events indicated by dashed lines triggered when the PDOP drops below a threshold of 1. The $46\%$ reduction in position RMSE highlights the importance of fast and accurate initialization.}
    \label{fig:ATE}
    \vspace{-4mm}
\end{figure}
To test our framework with a UAV, we replicated the experiments conducted in~\cite{Delama2023UVIO:Initialization}.
These experiments utilized a UAV equipped with a Raspberry Pi 4 flight computer, an IMU, a Matrix Vision BlueFOX camera, and a Qorvo MDEK1001 UWB transceiver, communicating with four additional UWB modules deployed as fixed anchors.
The UAV autonomously navigated through an indoor space measuring $4\times6.5\times7~m^3$, where an Optitrack motion capture system was used to record the ground-truth poses of both UAV and anchors precisely.
The experiments are structured in two stages: an autonomous phase where the UAV flies through random waypoints and a manual phase where a pilot performs more agile maneuvers.
\begin{table}[t]
    \centering
    \caption{UAV real-world experiments: evaluating in-flight anchor initialization errors and times.}
    \begin{tabular}{|>{\centering\arraybackslash}p{0.2cm}|>{\centering\arraybackslash}p{1.3cm}|>{\centering\arraybackslash}p{0.8cm}|>{\centering\arraybackslash}p{0.8cm}|>{\centering\arraybackslash}p{0.8cm}|>{\centering\arraybackslash}p{0.8cm}|>{\centering\arraybackslash}p{0.8cm}|>{\centering\arraybackslash}p{0.8cm}|}
    \hline
        \# & Anchor ID &  19333 & 22660 & 7435 & 19110 & Avg. \\ \hline\hline
        \multirow {2}{*}{1} & \textbf{Err. ($m$)} & \textbf{0.119} & \textbf{0.155} & \textbf{0.139} & \textbf{0.398} & \textbf{0.203} \\ 
        & $t_{init}$ ($s$) & 40.77 & 137.52 & 95.05 & 40.77 & 78.53 \\ \hline
        \multirow {2}{*}{2} & \textbf{Err. ($m$)} & \textbf{0.126} & \textbf{0.169} & \textbf{0.077} & \textbf{0.196} & \textbf{0.142} \\ 
        & $t_{init}$ ($s$) & 62.94 & 106.85 & 167.79 & 58.55 & 99.03 \\ \hline
        \multirow {2}{*}{3} & \textbf{Err. ($m$)} & \textbf{0.089} & \textbf{0.110} & \textbf{0.117} & \textbf{0.136} & \textbf{0.113} \\ 
        & $t_{init}$ ($s$) & 45.44 & 89.88 & 45.44 & 41.21 & 55.49 \\ \hline
        \multirow {2}{*}{4} & \textbf{Err. ($m$)} & \textbf{0.112} & \textbf{0.169} & \textbf{0.159} & \textbf{0.396} & \textbf{0.209} \\ 
        & $t_{init}$ ($s$) & 73.22 & 166.01 & 166.01 & 73.22 & 119.61 \\ \hline
    \end{tabular}
    \label{tab:rwerrors}
    \vspace{-3mm}
\end{table}

To evaluate initialization performance with real-world UWB range data, we conduct a preliminary set of experiments using the UAV's ground-truth pose.
This approach eliminates the impact of VIO drift, which affects the validation of the initialization phase.
We perform four experiments with two distinct anchor configurations.
All four anchors were initialized in all experiments with an estimated PDOP threshold set to $1$.
The results in \tabref{rwerrors} confirm that the average initialization errors are consistent with those observed in simulations.
Our method demonstrates superior performance compared to the results presented in~\cite{Delama2023UVIO:Initialization}, with shorter initialization time ($88~s$ vs. $220~s$ avg.) and reduced error ($0.167~m$ vs. $0.251~m$ avg.).
Fast and accurate anchor initialization reduces drift in UWB-aided VIO systems, as presented in the following.
\begin{figure}[t]
    \centering
    \includegraphics[width=1.0\linewidth]{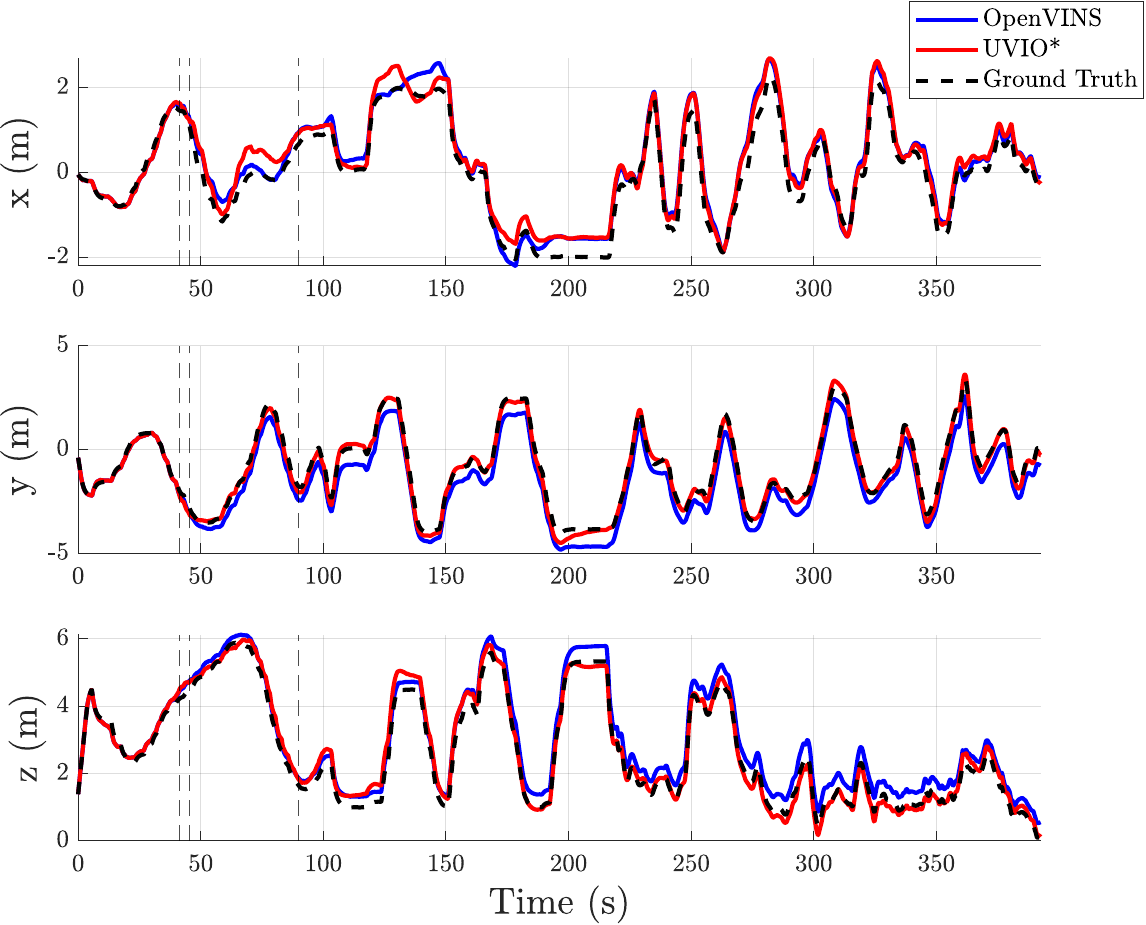}
    \vspace{-8mm}
    \caption{Comparison of ground-truth and estimated trajectories for the real-world experiment 2 described in the last part of the section and for which the ATE is shown in \figref{ATE}. UWB-aided VIO shows enhanced accuracy. Vertical dashed lines indicate the timestamps of UWB anchor initialization.}
    \label{fig:position}
    \vspace{-3mm}
\end{figure}
\begin{table}[t]
    \centering
    \caption{UAV real-world experiments: performance comparison between OpenVINS~\cite{GenevaOpenVINS:Estimation} and UVIO~\cite{Delama2023UVIO:Initialization} with our method.}
    \begin{tabular}{|>{\centering\arraybackslash}p{0.2cm}|>{\centering\arraybackslash}p{1.6cm}|>{\centering\arraybackslash}p{2.0cm}|>{\centering\arraybackslash}p{2.0cm}|}
    \hline
          \# & System & $\mathrm{RMSE}_{pos}$ ($m$) & $\mathrm{RMSE}_{att}$ ($deg$) \\ \hline\hline
          \multirow {2}{*}{1}
          & OpenVINS & 0.73 & 4.47 \\
          & UVIO* & \textbf{0.50} & \textbf{4.14} \\ \hline
          \multirow {2}{*}{2}
          & OpenVINS & 0.71 & 4.67 \\
          & UVIO* & \textbf{0.38} & \textbf{4.56} \\ \hline
          \multirow {2}{*}{3}
          & OpenVINS & 0.72 & \textbf{4.35} \\
          & UVIO* & \textbf{0.34} & 4.5 \\ \hline
          \multirow {2}{*}{4}
          & OpenVINS & 0.71 & 4.44 \\
          & UVIO* & \textbf{0.57} & \textbf{3.64} \\ \hline
    \end{tabular}
    \label{tab:rwexp}
    \vspace{-3mm}
\end{table}

We evaluate real-time performance within the UWB-aided VIO system UVIO~\cite{Delama2023UVIO:Initialization}, modifying it into UVIO* by integrating the proposed real-time anchor calibration framework.
Based on OpenVINS~\cite{GenevaOpenVINS:Estimation}, UVIO* initializes anchors by using VIO-estimated poses.
Performance is assessed via Average Trajectory Error (ATE) in two setups: standard VIO (OpenVINS) and UVIO*, both using identical VIO parameters.
In fact, without initialized UWB anchors, UVIO* defaults to OpenVINS, reflecting its extension of this framework.
\autoref{fig:ATE} shows a significant $46\%$ reduction in the Root Mean Squared Error (RMSE) of the position ATE compared to VIO-only.
\autoref{fig:position} displays the x-y-z trajectory plots corresponding to the experiment in~\figref{ATE}.
This improvement surpasses the results in~\cite{Delama2023UVIO:Initialization}, highlighting the importance of timely initialization for drift-prone localization methods like VIO.
Unlike~\cite{Delama2023UVIO:Initialization}, our method rapidly and precisely estimates unknown UWB anchors within seconds of flight, enabling the system to leverage initialized anchors sooner and effectively mitigating drift accumulation.
The key to this efficiency is our method’s online PDOP estimation of \secref{pdop}, which replaces the extended initialization process used in~\cite{Delama2023UVIO:Initialization}.
\autoref{tab:rwexp} summarizes position and attitude RMSE across four experiments, demonstrating the superior performance of UVIO*.

\section{Conclusion}
\label{sec:conclusion}
This paper proposes an automated framework for the calibration of unknown anchors in UWB-aided navigation systems.
A key contribution is a novel real-time PDOP estimation proved to be underconfident in practical scenarios, ensuring a conservative and reliable initialization decision without needing an initial anchor position guess.
By only triggering initialization when a well-conditioned geometric configuration is achieved, our approach prevents poor anchor estimates that could degrade navigation accuracy.
The framework integrates an online lightweight outlier rejection method and an adaptive robust kernel for nonlinear optimization, further improving robustness.
Extensive real-world experiments with an autonomous forklift and a UAV show that our PDOP-based strategy leads to more accurate positioning and significantly faster initialization compared to state-of-the-art methods.
The open-source C++ library with a ROS1 wrapper makes the implementation widely accessible to the community.

\section{Acknowledgement}
The authors would like to thank the members of the Center for Vision, Automation \& Control of the Austrian Institute of Technology (AIT) under the lead of Csaba Beleznai for assistance in recording the forklift outdoor dataset.

\bibliographystyle{IEEEtran}
\bibliography{bibliography/uwb.bib, Overleaf/bibliography/extra.bib}

\end{document}